
\documentclass[letterpaper, 10 pt]{article}



\usepackage{amsmath,amsthm,amssymb} 

\usepackage[dvipsnames,table]{xcolor} 

\usepackage{cite} 
\usepackage{xspace} 
\usepackage{mathtools}
\usepackage{cuted} 
\usepackage{lipsum} 
\usepackage{widetext}

\usepackage{graphicx} 
\usepackage{subcaption} 
\usepackage{multirow} 
\usepackage{float} 
\usepackage{tikz}

\usepackage{enumitem}

\usepackage{algorithm,algorithmic}

\usepackage{booktabs} 
\usepackage{tabu} 
\usepackage{makecell} 
\usepackage{footnote}
\makesavenoteenv{tabular} 

\usepackage[hyphens]{url}
\usepackage{etoolbox} 

\usepackage[margin=1in]{geometry} 


\newcommand{\Ebb}{\mathbb{E}} 
\newcommand{\Rbb}{\mathbb{R}} 

\newcommand{\Dcal}{\mathcal{D}}
\newcommand{\Hcal}{\mathcal{H}}

\newcommand{\Scal}{\mathcal{S}} 
\newcommand{\Ncal}{\mathcal{N}} 
\newcommand{\Wcal}{\mathcal{W}}

\newcommand{\Ucal}{\mathcal{U}} 

\DeclareMathOperator*{\argmin}{argmin} 


\newcommand{\ones}{\mathbf{1}} 

\newcommand{\avg}{\text{avg}}

\newcommand\norm[1]{{\left\lVert#1\right\rVert}} 
\newcommand\normts[1]{{\left\lVert#1\right\rVert_2^2}} 
\newcommand\abs[1]{{\left|#1\right|}} 
\newcommand\paren[1]{{\left(#1\right)}} 
\newcommand\Sbraces[1]{{\left[#1\right]}} 
\newcommand\Abraces[1]{{\left\langle#1\right\rangle}} 


\newcommand{\algindent}{\hspace{\algorithmicindent}} 

\newcommand{\ram}{\rightarrow} 
\newcommand\numberthis{\addtocounter{equation}{1}\tag{\theequation}} 

 
\renewcommand{\u}{{\mathbf{u}}} 
\renewcommand{\v}{{\mathbf{v}}} 
\newcommand{\x}{{\mathbf{x}}} 
\newcommand{\y}{{\mathbf{y}}} 
\newcommand{\z}{{\mathbf{z}}} 
\newcommand{\ox}{{\overline{\x}}} 
\newcommand{\oz}{{\overline{\z}}} 
 
\newcommand{\zs}{{\mathbf{z}^*}} 
\newcommand{\os}{{\overline{\sigma}}} 
 
\newcommand{\sumin}{{\sum_{i=1}^n}} 
\newcommand{\noo}{{\frac{1}{n}\ones\ones^T}} 
 
\newcommand{\EbbSt}{{\Ebb_{\Scal_t}}} 

\newtheorem{theorem}{Theorem}

\newtheorem{lemma}[theorem]{Lemma}
\theoremstyle{definition}

\newtheorem{fact}{Fact}

\newtheorem{assumption}{Assumption}

\title{\LARGE \bf FedDec: Peer-to-peer Aided Federated Learning}

\author{Marina Costantini$^{1,*}$ \and Giovanni Neglia$^2$ \and Thrasyvoulos Spyropoulos$^{1,3}$}
\date{\normalsize{
    $^1$EURECOM, Sophia Antipolis \qquad
    $^2$Inria \& Université Côte d'Azur\\
    $^3$Technical University of Crete \qquad
    $^*$Correspondence: marina.costantini@eurecom.fr
    } }

\pdfminorversion=4

\begin{document}

\maketitle


\thispagestyle{empty}
\pagestyle{empty}


\begin{abstract}

Federated learning (FL) has enabled training machine learning models exploiting the data of multiple agents without compromising privacy. 
However, FL is known to be vulnerable to data heterogeneity, partial device participation, and infrequent communication with the server, which are nonetheless three distinctive characteristics of this framework.
While much of the recent literature has tackled these weaknesses using different tools, only a few works have explored the possibility of exploiting inter-agent communication to improve FL's performance.
In this work, we present FedDec, an algorithm that interleaves peer-to-peer communication and parameter averaging (similar to decentralized learning in networks) between the local gradient updates of FL.
We analyze the convergence of FedDec under the assumptions of non-iid data distribution, partial device participation, and smooth and strongly convex costs, and show that inter-agent communication alleviates the negative impact of infrequent communication rounds with the server by reducing the dependence on the number of local updates $H$ from $O(H^2)$ to $O(H)$. 
Furthermore, our analysis reveals that the term improved in the bound is multiplied by a constant that depends on the spectrum of the inter-agent communication graph, and that vanishes quickly the more connected the network is.
We confirm the predictions of our theory in numerical simulations, where we show that FedDec converges faster than FedAvg, and that the gains are greater as either $H$ or the connectivity of the network increase.

\end{abstract}

\section{Introduction}

Federated learning (FL) is a recent machine learning framework that allows multiple agents, each of them with their own dataset, to train a model collaboratively without sharing their data \cite{mcmahan2017communication, karimireddy2020scaffold, li2020federated, qu2021federated}. 
The \textit{federated} setting assumes that all agents are connected to a server that can communicate with each of them and that is in charge of aggregating the agents' updates to obtain the global model. 
This is similar to \textit{parallel distributed} (PD) model training \cite{xiao2019dscovr, recht2011hogwild, liu2014asynchronous, smith2018cocoa}, with one crucial difference: in the latter, the agents send \textit{gradients} to the central server to update the parameter value with a \textit{gradient step}, while in FL the agents send their \textit{own local parameters} for the server to \textit{average} them.
This has an impact on the communication frequency required by each framework: in PD one round of communication between (usually all) the agents and the server has to happen every time a (mini-batch) stochastic gradient descent (SGD) step is taken at the nodes, while in FL (i) multiple SGD updates can happen before a new server communication round takes place (which in FL literature are usually called \textit{local updates}), and (ii) not all devices need to engage in the server communication round (which is known as \textit{partial participation}). 
This makes FL a much more suitable option for settings with a large number of agents and a limited communication bandwidth with the server.

In contrast to the approaches described above, the \textit{decentralized} setting does not rely on a central server for the aggregation of the nodes' updates.
Instead, it assumes that the agents are interconnected in a network and each of them can exchange optimization values (either parameters or gradients, depending on the algorithm) with its direct neighbors \cite{nedic2009distributed, shi2015extra, duchi2011dual, scaman2017optimal, lian2017can, koloskova2019decentralized, uribe2020dual}. 
In the decentralized setting, every node performs an averaging step of all its neighbors' received values before taking a new gradient step.  
Algorithms for this setting are designed such that the local parameters of all nodes converge to the global minimizer, while in FL it is the central server who keeps track of the most recent parameter value and broadcasts it to all agents every once in a while. 


The attractive feature of FL of allowing to have server communication rounds every once in a while 
comes at a cost: the more infrequent the server communication rounds are (i.e., the more local updates are performed at the agents), the slower is the convergence \cite{li2019convergence, zhang2016parallel}.
For this reason, in this paper we propose to exploit inter-agent communication to reduce the negative impact of infrequent server communication rounds.
Given that (i) each agent is expected to have much fewer neighbors than the total number of agents, and (ii) short-range inter-agent communications allow for spectrum reuse, agents can communicate much more often between them than with the server \cite{hellaoui2018aerial, asadi2014survey}.

We propose FedDec, an FL algorithm where the agents can exchange and average their current parameters with those of their neighbors in between the local SGD steps. 
We show that this modification reduces the dependence of the convergence bound on the number of local SGD steps $H$ from $O(H^2)$ \cite{li2019convergence, stich2018local} to $O(H)$ (Theorem \ref{theo:convergence_non-iid_partial_participation}).
Furthermore, we show that, in our analysis, the extra $H$ factor is replaced  by a value $\alpha$ that depends on the spectrum of the graph defining the inter-agent communication. 
Since the value of $\alpha$ quickly decreases as the network becomes more connected, our result indicates that for mildly connected networks $H$ can be increased without severely hurting convergence speed (or conversely, for fixed $H$, FedDec will be faster than FedAvg \cite{mcmahan2017communication}, its counterpart without inter-agent communication).

Peer-to-peer communication within FL has been considered a few times in the past \cite{yemini2022semi, lin2021semi, chou2021efficient, hosseinalipour2022multi, koloskova2020unified}.
These works either analyze very general distributed settings that have FL with inter-agent communication as a particular case \cite{hosseinalipour2022multi, koloskova2020unified}, 
or show that FL with inter-agent communication converges at the same rate as standard FL, and outperforms it in simulations \cite{yemini2022semi, lin2021semi, chou2021efficient}.
However, none of them has characterized analytically how inter-agent communication reduces the impact of local updates on convergence, and in particular, how this reduction depends on the inter-agent connectivity.


Our contributions can be summarized as follows:
\begin{itemize}
    \item We introduce FedDec, an FL algorithm where the agents can average their parameters with those of their neighbors in between the SGD steps. 
    Our model accounts for failures in the inter-agent communication links, so that only a few (or even none) of the parameters of a node's neighbors may be averaged at some iterations.    
    
    \item We prove that, for non-iid data, partial device participation, and smooth and strongly convex objectives, FedDec converges at the $O(1/T)$ rate (where $T$ is the total number of iterations executed) of FL algorithms that do not account for inter-agent communication \cite{li2019convergence, stich2018local}, but improves the dependence on the number of local updates $H$ from $O(H^2)$ to $O(H)$.

    \item Furthermore, we show that the improved term is multiplied by a quantity that depends on the spectrum of the inter-agent communication network, and which quickly vanishes the more connected the network is.
   
    \item We support our theoretical findings with numerical simulations, where we confirm that the performance of FedDec with respect to FedAvg \cite{mcmahan2017communication} increases with both $H$ and the connectivity of the network. 
    
\end{itemize}

\section{System Model and the FedDec Algorithm} \label{sec:model}

\begin{figure}[t]
\centering
\includegraphics[width=0.6\linewidth]{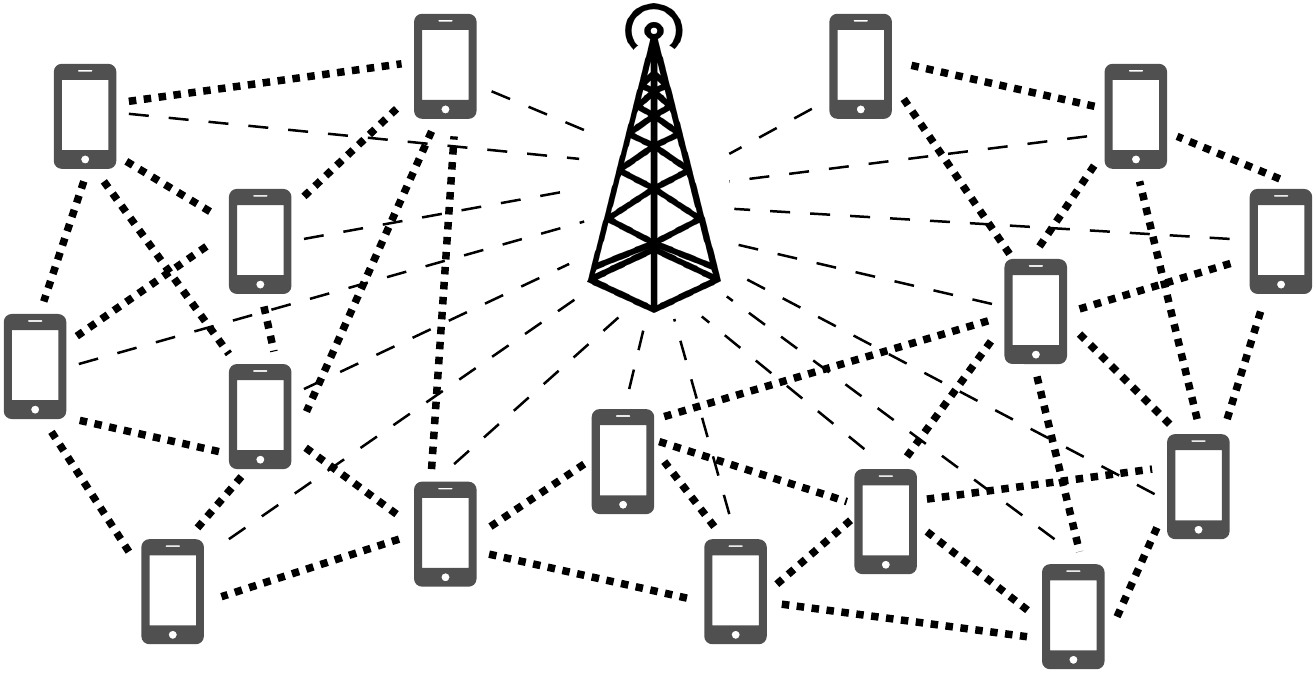}
\caption{The FedDec setting. Peer-to-peer communication links are shown in dotted lines (high-bandwidth links). Communication links between the agents and the server are shown in dashed lines (low-bandwidth links). }
\label{fig:feddec_setting}
\end{figure}

We consider a system where $n$ agents can exchange messages with a central server and also with some other nearby agents. 
We assume that the inter-agent communication links may fail at some iterations (e.g. due to outage), but when all links are active the agents form a connected network (see Figure \ref{fig:feddec_setting}).
Each node $i \in [n]$ has a local cost $F_i: \Rbb^d \ram \Rbb$,
\[ F_i(\z) := \Ebb_{\psi_i \sim \Dcal_i} F_i(\z, \psi_i), \]
where $\Dcal_i$ can be an underlying local data distribution from where new samples (or mini-batches) are drawn each time an SGD step is taken, or the uniform distribution over a static dataset.
Note that the $\Dcal_i$ can be different at each node. 
The objective of the nodes and the server is to find the minimizer
\[ \z^* := \argmin_{\z \in \Rbb^d} f(\z), \qquad f(\z) := \frac{1}{n} \sumin F_i(\z) \]
under the constraints that nodes can only communicate with their direct neighbors (high-bandwidth links) and every once in a while they can get a request from the server to send their current parameter values (low-bandwidth links). 
In wireless settings, these capacities are imposed by the shared nature of the cellular medium. 
While the communication with the server is constrained by the bandwidth available, device-to-device communications in the short range allows for spectrum reuse, and thus for higher throughput \cite{hellaoui2018aerial, asadi2014survey}. 

At each server communication round, the server samples the devices uniformly\footnote{Our analysis is readily extendable to the case where the server samples with non-uniform probabilities $\{p_i\}_{i=1}^n$, in which case the cost becomes $f(\z) {=} \sumin p_i F_i(\z)$ and the term 
$\frac{\os^2}{n}$ 
in Theorem \ref{theo:convergence_non-iid_partial_participation} becomes $\sumin p_i^2 \sigma_i^2$.} 
at random with replacement to form an index pool $\Scal_t$ of devices that it will poll during that round. We assume $|\Scal_t| = K \; \forall \, t$.
It then averages the parameters of all $j \in \Scal_t$ and broadcasts the new value to \textit{all} nodes in the network. 
Due to the limited bandwidth, we assume partial participation, i.e. $K \ll n$. 
We assume that the server aggregation rounds happen every $H$ local updates, and we call $\Hcal = \{t: t \text{ modulo } H=0\}$ the set of those times.

One local update of FedDec for a node $i$ consists on 
(i) taking an SGD step, 
(ii) for all active links (i.e. $\forall j{:} \, W^t_{ij} > 0$), exchanging the new parameter value with its neighbors, and
(iii) combining all new values (including its own) with weights $W^t_{ij}$ to form the new iterate.
We call this algorithm FedDec, and the precise steps are shown in Algorithm \ref{alg:feddec}. 

\begin{algorithm}[t]
    \caption{Peer-to-peer aided FL (FedDec)}
    \label{alg:feddec}
    \begin{algorithmic}[1]
        \STATE Initialize $\z_i^1 = \z^1 \; \forall i \in [n]$ and let $\eta_t = \frac{2}{\mu(t+\gamma)}$.
        \FOR{$t = 1,\ldots,T$ all agents $i \in [n]$}  
            \STATE Sample mixing matrix $W^t \sim \Wcal$
            \STATE Sample mini-batch $\xi_i^t$  and compute $\nabla F_i(\z_i^t, \xi_i^t)$ 
            \STATE Update local parameter $\x_i^{t+\frac{1}{2}} = \z_i^t - \eta_t \nabla F_i(\z_i^t, \xi_i^t)$
            \STATE Average with neighbors $\x_i^{t+1} = \sum_{j=1}^n W^t_{ij} \x_j^{t+\frac{1}{2}}$
            \STATE \textbf{If} $t+1 \in \Hcal$ \textbf{then}
            \STATE \algindent Server samples $\Scal_t = \{j_\ell: j_\ell \sim \Ucal([n])\}_{\ell=1}^K$
            \STATE \algindent it computes $\z^{t+1} = \frac{1}{K} \sum_{\ell = 1}^K \x_{j_\ell}^{t+1}$
            \STATE \algindent and broadcasts so that $\z_i^{t+1} = \z^{t+1} \; \forall i \in [n]$
            \STATE \textbf{otherwise}
            \STATE \algindent $\z_i^{t+1} = \x_i^{t+1}$
        \ENDFOR
        \STATE Output $\z^{T+1}$
    \end{algorithmic}
\end{algorithm}

With slight abuse of notation, the stochastic gradient of a node $i$ computed on a mini-batch 
$\xi_i = \{\psi_i^j: \psi_i^j \sim \Dcal_i \}_{j=1}^m$ of size $m$ is given by 
$\nabla F_i(\z_i, \xi_i) = \frac{1}{m} \sum_{j=1}^m \nabla F_i(\z_i, \psi_i^j)$.
For simplicity, we will assume that the number of iterations $T$ satisfies $(T$ modulo $H) = 0$, so that the outputted value in Alg. \ref{alg:feddec} is the current parameter at all nodes.
We will also take the following assumptions, which are standard in the literature \cite{koloskova2019decentralized, stich2018local, li2019convergence, lin2021semi, chou2021efficient, yemini2022semi}.

\begin{assumption} \label{assu:setting}
    We assume the following $\forall F_i(\z), i \in [n]$: 

    1) \textbf{$L$-smoothness and $\mu$-strong convexity:}
    \begin{align} 
        F_i(\y) &\leq F_i(\x) + \Abraces{\nabla F_i(\x), \y-\x} + (L / 2) \normts{\y-\x} \label{eq:smoothness}, \\ 
        F_i(\y) &\geq F_i(\x) + \Abraces{\nabla F_i(\x), \y-\x} + (\mu / 2) \normts{\y-\x}. \label{eq:strong_conv}
    \end{align}

    2) \textbf{Bounded variance of the local gradients:}
    \[\Ebb \norm{\nabla F_i(\x, \xi_i) - \nabla F_i(\x)}_2^2 \leq \sigma_i^2. \]
    
   3) \textbf{Bounded energy of the local gradients:} 
   \begin{equation} \label{eq:bounded_energy_gradients}
       \Ebb \norm{\nabla F_i(\x, \xi_i)}_2^2 \leq G^2 \text{ for } i \in [n].
   \end{equation}
    
\end{assumption}

We remark that \eqref{eq:strong_conv} implies $\norm{\nabla F_i(\z_i)}_2 \geq \mu \norm{\z_i - \z_i^*}_2$ (see definition of $\z_i^*$ below). Therefore, in order to satisfy \eqref{eq:bounded_energy_gradients} we must additionally assume that the parameter iterates $\z_i$ belong to a bounded set throughout the iterations.

Note that $L$-smoothness implies
\begin{equation} \label{eq:smooth_dFxdFx_leq_Lxy}
    \norm{\nabla F_i(\x) - \nabla F_i(\y)} \leq L \norm{\x-\y}_2,
\end{equation}
\begin{equation} \label{eq:smooth_df2_leq_ffs}
  \normts{\nabla f(\z)} {\,=\,} \norm{\nabla f(\z) {-} \nabla f(\zs)}_2^2 {\,\leq\,}  2L(f(\z){-}f(\zs)).
\end{equation}

Furthermore, the local gradient's bounded variance implies
\begin{equation} \label{eq:bound_avg_sigma_2}
    \Ebb \norm{\frac{1}{n} \sumin \paren{\nabla F_i(\z_i^t) - \nabla F_i(\z_i^t,\xi_i^t)}}_2^2 
    \leq \frac{\os^2}{n},
\end{equation}
with $\os^2 = \frac{1}{n} \sumin \sigma_i^2$.

We quantify the degree of heterogeneity (or non-iidness) between the local functions through the quantity 
\[ \Gamma = \frac{1}{n} \sumin (F_i(\zs) - F_i(\z_i^*)), \text{ where } \; \z_i^* = \argmin_{\z} F_i(\z). \]

FedDec uses two parameters to track the updates, $\z_i$ and $\x_i$ (see Alg. \ref{alg:feddec}). 
%
%
We define 
\[ \ox^t = \frac{1}{n} \sum_{i=1}^n \x_i^t, 
\qquad \oz^t = \frac{1}{n} \sum_{i=1}^n \z_i^t, \]
which will be useful in the analysis. 
Note that $\oz^t = \ox^t$ only when $t \notin \Hcal$. 
Otherwise, if $t \in \Hcal$, we have that the equality holds only in expectation:
\begin{equation} \label{eq:exp_oz_is_ox}
    \Ebb_{\Scal_t} \oz^t 
    = \Ebb_{\Scal_t} \frac{1}{n} \sumin \z_i^t 
    = \frac{1}{n} \sumin \frac{1}{K} \sum_{\ell=1}^K \Ebb_{\Scal_t} \x_{j_\ell}^t 
    = \ox^t.
\end{equation} 

Lastly, we assume the following about the $W^t = \{W^t_{ij}\}$.

\begin{assumption} \label{assu:mixing_matrix}
    The averaging matrices $W^t \in \Rbb^{n \times n}$ are iid random variables drawn from a distribution $\Wcal$ of matrices that (i) are symmetric, (ii) are doubly stochastic, and (iii) have $W^t_{ij} \geq 0$ if agents $i$ and $j$ are connected and $W^t_{ij} = 0$ otherwise. Note that this implies that $\forall \, W \in \Wcal: W\ones = \ones$, $\ones^T W = \ones^T$.
    Additionally, we require that the eigenvalues of $\Ebb_W \Sbraces{WW^T}$ satisfy $1 = \lambda_1 > \abs{\lambda_2} \geq \ldots \geq \abs{\lambda_n}$.
\end{assumption}

In the next section we prove that FedDec converges as $O(1/T)$, similarly to other FL algorithms taking the same assumptions, but it reduces the negative impact of local updates by replacing an $H^2$ factor \cite{li2019convergence, stich2018local}, with $H\alpha$, where $\alpha$ is a quantity that decreases quickly as the inter-agent communication network becomes more connected.
\section{Convergence Analysis}

The following theorem establishes the convergence rate of FedDec and constitutes our main result. 

\begin{theorem}[] \label{theo:convergence_non-iid_partial_participation}
Under Assumptions \ref{assu:setting}, \ref{assu:mixing_matrix}, and for diminishing stepsize $\eta_t = \frac{2}{\mu(\gamma+t)}$, FedDec in Algorithm \ref{alg:feddec} converges as 
    \[ \Ebb[f(\oz^t)] - f(\zs) \leq \frac{L}{\gamma + t} \paren{ \frac{2B}{\mu^2} + \frac{(\gamma+1)}{2} \normts{\z^1 - \zs} } \]
    where 
    \begin{gather*}
        \gamma = \max\{8(L/\mu)-1, H\} \\
        B = \paren{4/K + 8} \alpha H G^2 + 6L\Gamma + \os^2/n \\
        \alpha = |\widehat{\lambda}_2| \, / \, (1 - |\widehat{\lambda}_2|)
    \end{gather*}
    and $|\widehat{\lambda}_2| =  |\lambda_2 \paren{ \Ebb_W \Sbraces{W W^T} }| $.
\end{theorem}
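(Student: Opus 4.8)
The plan is to follow the now-standard "perturbed iterate" analysis for FedAvg (as in \cite{li2019convergence, stich2018local}), tracking the virtual averaged iterate $\oz^t$, but with the key difference that the inter-agent mixing steps contract the consensus error and replace one factor of $H$ by $\alpha$. First I would observe that, because each $W^t$ is doubly stochastic, the averaging step does not change the mean: $\ox^{t+1} = \oz^t - \eta_t \frac{1}{n}\sumin \nabla F_i(\z_i^t,\xi_i^t)$, and by \eqref{eq:exp_oz_is_ox} the server sampling step also preserves the mean in expectation. So $\oz^t$ evolves exactly like a centralized SGD iterate with gradient $\frac{1}{n}\sumin \nabla F_i(\z_i^t)$ plus zero-mean noise of variance $\os^2/n$ (by \eqref{eq:bound_avg_sigma_2}) plus a sampling-variance term that I will bound separately.

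The core recursion I would derive is a one-step bound of the form
\[
\Ebb\normts{\oz^{t+1}-\zs} \le (1-\mu\eta_t)\,\Ebb\normts{\oz^t-\zs} - c\,\eta_t\,\Ebb[f(\oz^t)-f(\zs)] + \eta_t^2 \Ebb\|\text{noise}\|^2 + (\text{error terms}),
\]
where the "noise" contributes $\os^2/n$ and the server-sampling variance contributes the $(4/K)\alpha H G^2$-type term (bounded via $\Ebb\|\nabla F_i\|^2\le G^2$ from \eqref{eq:bounded_energy_gradients} and the fact that sampling $K$ indices uniformly with replacement gives variance $\le \frac{1}{K}\cdot\frac{1}{n}\sumin\normts{\x_i^{t+1}-\ox^{t+1}}$). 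Using $L$-smoothness and strong convexity (via \eqref{eq:smooth_df2_leq_ffs} and the standard $\langle \nabla f(\oz^t),\oz^t-\zs\rangle \ge f(\oz^t)-f(\zs)+\tfrac{\mu}{2}\normts{\oz^t-\zs}$), the cross terms between $\nabla f(\oz^t)$ and the discrepancy $\frac1n\sumin(\nabla F_i(\z_i^t)-\nabla F_i(\oz^t))$ are controlled by the \textbf{consensus error} $\frac1n\sumin \Ebb\normts{\z_i^t-\oz^t}$, which also absorbs the $\Gamma$ term through the usual bound $\frac1n\sumin\langle\nabla F_i(\oz^t),\oz^t-\zs\rangle \ge f(\oz^t)-f(\zs) - L\Gamma\cdot(\text{const})$ (this is where $6L\Gamma$ enters).

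The crux — and the step I expect to be the main obstacle — is bounding the consensus error $\frac1n\sumin \Ebb\normts{\z_i^t-\oz^t}$ and showing it is $O(\alpha H \eta_t^2 G^2)$ rather than $O(H^2\eta_t^2 G^2)$. In plain FedAvg this quantity is bounded by noting that since the last server sync (at most $H$ steps ago) each local iterate has drifted by at most $H\eta G$, giving $H^2\eta^2 G^2$. Here, the intermediate mixing steps contract: writing the stacked deviations and using Assumption \ref{assu:mixing_matrix} (so that $\Ebb_W\|(I-\tfrac1n\ones\ones^T)W\x\|^2 \le |\widehat\lambda_2|\,\|(I-\tfrac1n\ones\ones^T)\x\|^2$), one gets a geometric series: a gradient perturbation injected $s$ steps ago is damped by $|\widehat\lambda_2|^s$, so summing over the $\le H$ steps since the last sync yields $\sum_{s\ge 0}|\widehat\lambda_2|^s \le 1/(1-|\widehat\lambda_2|)$ per step and hence a factor $H \cdot \frac{|\widehat\lambda_2|}{1-|\widehat\lambda_2|} = H\alpha$ overall (one factor $H$ from the number of injected perturbations, the $\alpha$ from the damped sum, instead of $H$ from undamped accumulation). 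Care is needed because the server-sync step at times in $\Hcal$ resets the consensus error to zero (the broadcast makes all $\z_i^t$ equal), and because the $W^t$ are random and independent of the gradients, so the expectations factorize cleanly. Once this $O(\alpha H \eta_t^2 G^2)$ bound on the consensus error is in hand, I would plug everything into the one-step recursion, collect all the $\eta_t^2$-coefficients into $B = (4/K+8)\alpha H G^2 + 6L\Gamma + \os^2/n$, and finish with the standard induction argument: assuming $\Ebb\normts{\oz^t-\zs}$ (or the function-value gap) is $\le \frac{v}{\gamma+t}$ for the stepsize $\eta_t=\frac{2}{\mu(\gamma+t)}$ with $\gamma=\max\{8L/\mu-1,H\}$, verify the inductive step reduces to an elementary inequality in $t$, and convert the bound on $\normts{\oz^t-\zs}$ to one on $\Ebb[f(\oz^t)]-f(\zs)$ via $L$-smoothness, yielding the stated rate.
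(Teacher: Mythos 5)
Your proposal follows essentially the same route as the paper: the same decomposition of $\Ebb\normts{\oz^{t+1}-\zs}$ into a server-sampling variance term (bounded by $\frac{1}{K}$ times the consensus error) and a one-step SGD progress term, the same spectral contraction argument on the stacked deviations yielding the $\alpha H\eta_t^2 G^2$ consensus bound with $\alpha=|\widehat\lambda_2|/(1-|\widehat\lambda_2|)$, and the same closing induction on the recursion $\Delta^{t+1}\le(1-\mu\eta_t)\Delta^t+\eta_t^2 B$. The only cosmetic difference is that you describe the consensus bound as a geometric sum of damped perturbations, whereas the paper uses Young's inequality with $(1+\alpha^{-1})|\widehat\lambda_2|=1$ to keep the recursion coefficient at one and pay $(1+\alpha)|\widehat\lambda_2|=\alpha$ on each injected gradient term; these are equivalent.
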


The theorem shows that factors like the energy and the variance of the local gradients, the heterogeneity of the local functions, and the distance of the starting point to the optimum all slow down convergence, which are known facts.
However, this bound also shows how inter-agent communication partially mitigates the negative impact of local updates: the term where $H$ appears decreases with $\alpha$, and therefore decreases very fast with $|\widehat{\lambda}_2|$ (see Figure \ref{fig:lambda_dependence}).

Note that if all inter-agent communication links are assumed to be always active, then $W^t = W$ is a fixed matrix and $|\widehat{\lambda}_2| = \abs{\lambda_2}^2$.
For any given heuristic to construct $W$ (e.g. based on the Laplacian of the graph \cite{xiao2004fast}), the value of $\abs{\lambda_2}$ is, in general, lower the more connected the network is (see Table \ref{tab:values_lambda} in Section \ref{sec:simulations}). 
Therefore, the more densely connected the network is, the faster FedDec is expected to converge. 
In fact, the averaging weights $W_{ij}$ can be designed in order to minimize $\abs{\lambda_2}$ (and thus maximize the speedup from inter-agent communication) using eigenvalue optimization techniques \cite{boyd2006randomized}.

Comparing the bound of Theorem \ref{theo:convergence_non-iid_partial_participation} with that of Theorem 2 in \cite{li2019convergence}, obtained for the same setting but without allowing inter-agent communication, we note that the dependence of the first term in $B$ on the number of local iterations $H$ drops from $O(H^2)$ in \cite{li2019convergence} to $O(H)$ in our theorem. 
This suggests that \textit{the peer-to-peer communication of FedDec reduces the impact of the infrequent communication rounds with the server}, and thus its convergence should be less affected than that of FedAvg as $H$ increases. 
We verify this behavior in our simulations in Section \ref{sec:simulations}.

To prove the theorem, we will need to bound the quantity $\normts{\oz^t - \zs}$. 
For this we will decompose the term and bound $\normts{\oz^t - \ox^t}$ and $\normts{\ox^t - \zs}$ separately.
The following lemmas present these intermediate results, and we prove Theorem \ref{theo:convergence_non-iid_partial_participation} at the end of the section. 
The proofs of the lemmas are given in Appendix \ref{app:proofs_of_lemmas}.

\begin{figure}[t]
\centering \includegraphics[width=0.5\linewidth]{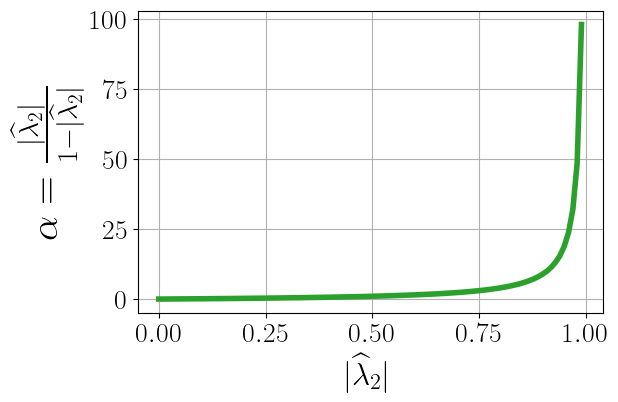}
\caption{Dependence of $\alpha$ with $|\widehat{\lambda}_2|$. For networks moderately connected (see Section \ref{sec:simulations}) we can expect that $\alpha \ll H$, and thus also that FedDec will be faster than FedAvg.}
\label{fig:lambda_dependence}
\end{figure}

\begin{lemma} \label{lem:bound_avg_to_optim}
    For FedDec with stepsize $\eta_t \leq \frac{1}{4L}$ it holds 
    \[ \Ebb {\normts{\ox^{t+1} - \zs}} \leq  
    (1-\mu \eta_t) \Ebb \norm{\oz^t - \zs}_2^2
    + \frac{2}{n} \Ebb \sumin \normts{\z_i^t - \oz^t} 
    + 6L \eta_t^2 \Gamma 
    + \eta_t^2 \frac{\os^2}{n}. \]
\end{lemma}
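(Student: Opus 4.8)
The goal is to bound $\Ebb\normts{\ox^{t+1}-\zs}$ in terms of $\Ebb\normts{\oz^t-\zs}$ plus error terms. The starting observation is that because every $W^t$ is doubly stochastic, averaging over neighbors preserves the mean: $\ox^{t+1} = \frac1n\sumin \x_i^{t+1} = \frac1n\sumin\sum_j W^t_{ij}\x_j^{t+1/2} = \frac1n\sumin \x_i^{t+1/2}$, and by the SGD step $\x_i^{t+1/2} = \z_i^t - \eta_t \nabla F_i(\z_i^t,\xi_i^t)$. Hence $\ox^{t+1} = \oz^t - \eta_t\,\overline{g}^t$ with $\overline{g}^t := \frac1n\sumin\nabla F_i(\z_i^t,\xi_i^t)$. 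This reduces the lemma to a fairly standard one-step SGD descent estimate, except that the gradients are evaluated at the \emph{local} iterates $\z_i^t$ rather than at $\oz^t$, which is the source of the consensus-error term $\frac2n\Ebb\sumin\normts{\z_i^t-\oz^t}$.

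First I would write $\normts{\ox^{t+1}-\zs} = \normts{\oz^t-\zs} - 2\eta_t\Abraces{\oz^t-\zs,\overline{g}^t} + \eta_t^2\normts{\overline{g}^t}$ and take expectation. For the cross term, I would split $\overline{g}^t$ into its mean $\overline{\nabla}^t := \frac1n\sumin\nabla F_i(\z_i^t)$ plus noise; the noise has zero conditional mean so the cross term becomes $-2\eta_t\Abraces{\oz^t-\zs,\overline{\nabla}^t}$. I would then handle $-2\eta_t\Abraces{\oz^t-\zs,\overline{\nabla}^t}$ by the usual trick of adding and subtracting $\nabla F_i(\oz^t)$ inside the average: the part with $\nabla F_i(\oz^t)$ gives, via strong convexity (adding a term for each $i$ and using $\frac1n\sumin F_i = f$), a bound of the form $-\eta_t\mu\normts{\oz^t-\zs} - 2\eta_t(f(\oz^t)-f(\zs))$ up to lower-order terms, while the part with $\nabla F_i(\z_i^t)-\nabla F_i(\oz^t)$ is controlled by $L$-smoothness \eqref{eq:smooth_dFxdFx_leq_Lxy} and Young's inequality, producing an $\eta_t L\cdot\frac1n\sumin\normts{\z_i^t-\oz^t}$-type contribution plus an $\eta_t$ multiple of $(f(\oz^t)-f(\zs))$. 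For the second-order term $\eta_t^2\Ebb\normts{\overline{g}^t}$, I would separate variance from bias: the variance contributes $\eta_t^2\os^2/n$ by \eqref{eq:bound_avg_sigma_2}, and $\Ebb\normts{\overline{\nabla}^t}$ is bounded by again splitting around $\nabla F_i(\oz^t)$, using \eqref{eq:smooth_df2_leq_ffs} to get $2L(f(\oz^t)-f(\zs))$ and smoothness for the consensus residual.

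The bookkeeping step is to collect all the $(f(\oz^t)-f(\zs))$ coefficients and check that, under $\eta_t\le\frac1{4L}$, they combine to something $\le 0$ (the $-2\eta_t$ from strong convexity must dominate the positive $\eta_t L$-weighted leftovers from the smoothness splits and the $2L\eta_t^2$ from \eqref{eq:smooth_df2_leq_ffs}); this is exactly where the $\frac1{4L}$ threshold is used. The heterogeneity term $6L\eta_t^2\Gamma$ enters precisely here: in the step where strong convexity is applied per-node, the gap between $\frac1n\sumin F_i(\zs)$ and $f(\zs)$ — together with the residual from bounding $\Abraces{\nabla F_i(\oz^t),\oz^t - \zs}$ below — is $\Gamma$, and carrying it through the $\eta_t L$ and $\eta_t^2 L$ weights produces the constant $6L$ (this mirrors the corresponding lemma in \cite{li2019convergence}, where the same $6L\Gamma$ appears). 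After cancellation what survives is $(1-\mu\eta_t)\Ebb\normts{\oz^t-\zs}$, the consensus term with a combined coefficient that one checks is $\le\frac2n$ for $\eta_t\le\frac1{4L}$, plus $6L\eta_t^2\Gamma + \eta_t^2\os^2/n$, which is exactly the claimed bound.

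I expect the main obstacle to be the constant-tracking in the $(f(\oz^t)-f(\zs))$ cancellation and in verifying that the consensus-error coefficient is no larger than $2/n$ after all the Young's-inequality splits — choosing the right Young weights (e.g. balancing so the $f$-gap terms cancel while not inflating the $\normts{\z_i^t-\oz^t}$ coefficient past $2/n$) is the only delicate part; everything else is the standard FedAvg-style one-step analysis adapted to carry the per-node consensus residual. The doubly-stochastic property of $W^t$ is what makes the first reduction ($\ox^{t+1}=\oz^t-\eta_t\overline g^t$) clean, so the mixing matrices play no further role in this particular lemma — their effect is quarantined to the separate bound on $\normts{\oz^t-\ox^t}$ and on $\frac1n\sumin\normts{\z_i^t-\oz^t}$.
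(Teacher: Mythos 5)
Your overall architecture matches the paper's: the reduction $\ox^{t+1} = \oz^t - \frac{\eta_t}{n}\sumin\nabla F_i(\z_i^t,\xi_i^t)$ via double stochasticity, the bias/variance split giving $\eta_t^2\os^2/n$, and the three-term descent expansion are exactly what the paper does. The genuine gap is in how you treat the cross term. You propose to split the \emph{gradient}, writing $\nabla F_i(\z_i^t) = \nabla F_i(\oz^t) + (\nabla F_i(\z_i^t)-\nabla F_i(\oz^t))$, and to control $-\frac{2\eta_t}{n}\sumin\Abraces{\oz^t-\zs,\,\nabla F_i(\z_i^t)-\nabla F_i(\oz^t)}$ by smoothness plus Young's inequality. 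That step cannot deliver the lemma's constants under $\eta_t\le\frac{1}{4L}$: any Young split with parameter $\delta$ yields $\eta_t\delta\normts{\oz^t-\zs} + \frac{\eta_t L^2}{\delta}\cdot\frac1n\sumin\normts{\z_i^t-\oz^t}$, so forcing the consensus coefficient down to $2/n$ requires $\delta\gtrsim\eta_t L^2$, leaving a penalty $+\Theta(\eta_t^2L^2)\normts{\oz^t-\zs}$ that the available $-\mu\eta_t\normts{\oz^t-\zs}$ cannot absorb unless $\eta_t\lesssim\mu/L^2$ --- a condition-number-worse stepsize than assumed (conversely, a small $\delta$ inflates the consensus coefficient by a factor of order $(L/\mu)^2$). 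The paper avoids this by splitting the \emph{iterate} instead: $\Abraces{\oz^t-\zs,\nabla F_i(\z_i^t)} = \Abraces{\oz^t-\z_i^t,\nabla F_i(\z_i^t)} + \Abraces{\z_i^t-\zs,\nabla F_i(\z_i^t)}$. The first piece is handled by \eqref{eq:CS_AM-GM} with weight $\delta=\eta_t$, so its leftover is a \emph{gradient-norm} term (foldable into the function-gap bookkeeping together with your term $(A)$) rather than a $\normts{\oz^t-\zs}$ term, and the second piece gives per-node strong convexity, whose $-\mu\normts{\z_i^t-\zs}$ dominates $-\mu\normts{\oz^t-\zs}$ by Jensen. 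That pairing of the consensus error with the gradient, rather than with $\oz^t-\zs$, is the missing idea.

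A secondary error: you locate $\Gamma$ in ``the gap between $\frac1n\sumin F_i(\zs)$ and $f(\zs)$,'' but these are equal by definition. $\Gamma$ arises because the positive function-gap terms (from bounding $\normts{\nabla F_i(\z_i^t)}$ by $2L(F_i(\z_i^t)-F_i(\z_i^*))$, i.e., relative to the \emph{local} minimizers) and the negative ones from strong convexity (relative to $\zs$) are measured against different reference points; reconciling them under $\eta_t\le\frac{1}{4L}$ is exactly term $(C)$ in the paper and produces $6L\eta_t^2\Gamma$. In your gradient-splitting route, where everything is referenced to $\nabla f(\oz^t)$ and $f(\zs)$, the quantity $\Gamma$ would not naturally appear at all --- another sign that the route diverges from the stated bound.
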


Lemma \ref{lem:bound_avg_to_optim} bounds the one-step progress of the algorithm \textit{before} a potential server aggregation round.

\begin{lemma} \label{lem:bound_local_to_avg_z}
    For stepsizes satisfying $\eta_t \leq 2 \eta^{t+H}$, it holds that 
    \[ \Ebb \sumin \norm{\z_i^t - \oz^t}_2^2 \leq \eta_t^2 4 \alpha H n G^2. \]
    with $\alpha = \frac{|\widehat{\lambda}_2|}{1 - |\widehat{\lambda}_2|}$.
\end{lemma}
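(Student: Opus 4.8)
The plan is to track the deviation of each node's parameter from the network average, $\z_i^t - \oz^t$, back to the most recent server synchronization time. Let $t_0 = \max\{s \le t : s \in \Hcal\}$ be the last synchronization before (or at) time $t$. At $t_0$ all nodes agree: $\z_i^{t_0} = \oz^{t_0}$ for all $i$, so the deviation is zero. Between $t_0$ and $t$ the update in matrix form is $Z^{t+1} = W^t\paren{Z^t - \eta_t \nabla F(Z^t,\xi^t)}$, where $Z^t$ stacks the $\z_i^t$ as rows; averaging with $\noo$ and subtracting, and using $W^t\ones = \ones$ and $\ones^T W^t = \ones^T$, the deviation $Z^{t+1} - \noo Z^{t+1}$ equals $(I - \noo) W^t \paren{Z^t - \noo Z^t} - \eta_t (I-\noo) W^t \paren{I - \noo}\nabla F(Z^t,\xi^t)$ (the projector $I - \noo$ commutes with $W^t$ here because $W^t$ is doubly stochastic). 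Unrolling this recursion from $t_0$ to $t$ expresses $Z^t - \noo Z^t$ as a sum of at most $H$ terms, each of the form $-\eta_s \paren{\prod_{r=s}^{t-1}(I-\noo)W^r}(I-\noo)\nabla F(Z^s,\xi^s)$.

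Next I would take the squared Frobenius norm, i.e. $\sumin \normts{\z_i^t - \oz^t}$, and bound it in expectation. The key tool is Assumption \ref{assu:mixing_matrix}: since the $W^r$ are i.i.d. and the eigenvalues of $\Ebb_W[WW^T]$ are bounded by $|\widehat\lambda_2|$ on the subspace orthogonal to $\ones$, each application of $(I-\noo)W^r$ contracts the expected squared norm of any $\ones$-orthogonal vector by a factor $|\widehat\lambda_2|$; composing $k$ independent such steps gives a factor $|\widehat\lambda_2|^k$. Combining this with the triangle inequality (or Cauchy–Schwarz across the $\le H$ summands) and the gradient energy bound $\Ebb\sumin\normts{\nabla F_i(\z_i^s,\xi_i^s)} \le nG^2$ from \eqref{eq:bounded_energy_gradients}, I get a bound of the shape $\Ebb\sumin\normts{\z_i^t - \oz^t} \le H n G^2 \sum_{s=t_0}^{t-1}\eta_s^2 |\widehat\lambda_2|^{t-1-s}$ (the leading $H$ from the number-of-terms factor in Cauchy–Schwarz). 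Using the stepsize monotonicity hypothesis $\eta_s \le 2\eta^{t+H}$ — which in particular lets me replace $\eta_s$ by $2\eta_t$ for all $s$ in the window of length $\le H$, since the stepsize is non-increasing — I can pull $\eta_s^2 \le 4\eta_t^2$ out, and then bound $\sum_{k\ge 0}|\widehat\lambda_2|^k \le \frac{1}{1-|\widehat\lambda_2|}$, which together with the definition $\alpha = |\widehat\lambda_2|/(1-|\widehat\lambda_2|)$ yields $\Ebb\sumin\normts{\z_i^t - \oz^t} \le \eta_t^2 \cdot 4\alpha H n G^2$, as claimed. (One subtlety: the geometric sum starting from exponent $1$ rather than $0$ is what produces $\alpha$ rather than $1/(1-|\widehat\lambda_2|)$; this corresponds to the fact that the most recent gradient step, before any mixing, still gets averaged once by $W^{t-1}$ in the expression for $Z^t$.)

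The main obstacle I anticipate is handling the expectation over the \emph{product} of random mixing matrices correctly. The clean factorization $\Ebb\big[\prod_r (I-\noo)W^r\big]$-type argument only works term-by-term on the squared norm because the $W^r$ are independent and each has the spectral property on the right space; one must be careful that $(I-\noo)$ is an orthogonal projector commuting with every $W^r$ (true since each $W$ is symmetric doubly stochastic, so shares the eigenvector $\ones$), and that conditioning on $Z^s$ decouples it from the future matrices $W^s,\ldots,W^{t-1}$. A secondary, more bookkeeping-type difficulty is the cross terms when expanding the squared norm of the sum over $s$: either bound them via Cauchy–Schwarz at the cost of the extra factor $H$ (the route I'd take, since the lemma allows it), or show they are non-positive / negligible in expectation, which is harder and unnecessary here. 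The stepsize condition $\eta_t \le 2\eta^{t+H}$ is exactly calibrated so that all stepsizes appearing in a length-$H$ window are comparable, so once the structure is set up the final algebra is routine.
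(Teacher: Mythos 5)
Your proposal is correct and reaches the paper's bound, but by a recognizably different mechanism. The paper does not unroll the deviation into a sum of propagated gradient terms; it instead sets up a one-step recursion
$\Ebb\|Z^t-\overline{Z}^t\|_F^2 \le |\widehat{\lambda}_2|(1+\alpha^{-1})\,\Ebb\|Z^{t-1}-\overline{Z}^{t-1}\|_F^2 + |\widehat{\lambda}_2|(1+\alpha)\,\eta_{t-1}^2 nG^2$
via the Young-type splitting \eqref{eq:bound_frob_A+B_alphas}, and then chooses the free parameter $\alpha = |\widehat{\lambda}_2|/(1-|\widehat{\lambda}_2|)$ precisely so that the contraction coefficient $(1+\alpha^{-1})|\widehat{\lambda}_2|$ equals $1$; unrolling then yields $h\le H$ terms, each equal to $\alpha\,\eta_{t-i}^2 nG^2 \le 4\alpha\,\eta_t^2 nG^2$. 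In your version the factor $H$ comes from Cauchy--Schwarz over the $\le H$ summands and the factor $\alpha$ from the geometric series $\sum_{k\ge 1}|\widehat{\lambda}_2|^k$, whereas in the paper $\alpha$ appears per term and $H$ from counting terms; the two accountings land on the same constant $4\alpha H nG^2\eta_t^2$. Your route demands the extra care you correctly flag --- independence of the future mixing matrices from $Z^s$ so the per-step contraction factors multiply, commutation of $I-\noo$ with each $W$, and the geometric exponent starting at $1$ because the step-$s$ gradient is mixed at least once --- while the paper's route avoids expectations of products of random matrices entirely, at the price of the somewhat magical choice of the Young parameter. Both arguments are valid, and your explicit geometric-decay picture arguably makes the origin of $\alpha$ more transparent.
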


Lemma \ref{lem:bound_local_to_avg_z} bounds the divergence of the local parameters to their average, which increases through the $H$ iterations in between the server broadcasting rounds.
It is in this process (which involves multiple neighbor averaging steps) where we see the impact of the connectivity of the graph.

As remarked in Section \ref{sec:model}, $\oz^t = \ox^t \, \forall \, t \notin \Hcal$. 
Otherwise, the equality holds only in expectation (eq. \eqref{eq:exp_oz_is_ox}).
Lemma \ref{lem:difference_averages} bounds the variance of $\oz$ in the latter case.

\begin{lemma} \label{lem:difference_averages}
    For $t \in \Hcal$ and stepsizes satisfying $\eta_t \leq 2 \eta_{t+H}$
    \[ \Ebb \normts{\ox^t - \oz^t} \leq \frac{1}{K} \eta_t^2 4 \alpha H G^2, \]
    with $\alpha$ given in Lemma \ref{lem:bound_local_to_avg_z}.
\end{lemma}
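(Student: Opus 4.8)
The plan is to use that, for $t\in\Hcal$, the server iterate is $\oz^t=\z^t=\frac1K\sum_{\ell=1}^K\x_{j_\ell}^t$, the average of $K$ indices drawn i.i.d.\ uniformly from $[n]$ with replacement, and that by \eqref{eq:exp_oz_is_ox} this is an unbiased estimator of $\ox^t$ conditionally on the history that determines the $\x_i^t$ (the sampling $\Scal_t$ being independent of that history). So $\normts{\ox^t-\oz^t}$ is a sampling variance and should scale like $1/K$ times the dispersion of the $\x_i^t$ about their mean. Concretely, write $\ox^t-\oz^t=\frac1K\sum_{\ell=1}^K(\ox^t-\x_{j_\ell}^t)$; the summands are i.i.d.\ in $\ell$ with conditional mean zero, so the cross terms vanish under $\EbbSt$ and
\[ \EbbSt\normts{\ox^t-\oz^t}=\frac{1}{K^2}\sum_{\ell=1}^K\EbbSt\normts{\ox^t-\x_{j_\ell}^t}=\frac1K\cdot\frac1n\sumin\normts{\x_i^t-\ox^t}. \]
Taking the remaining expectation and using the tower rule, the lemma reduces to showing $\Ebb\sumin\normts{\x_i^t-\ox^t}\le \eta_t^2\,4\alpha HnG^2$ for $t\in\Hcal$.

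For that bound I would observe that, for $t\in\Hcal$, the pre-aggregation iterates $(\x_i^t)_i$ are produced by exactly the recursion analysed in Lemma \ref{lem:bound_local_to_avg_z}: inside the round ending at $t$ one has $\z_i^s=\x_i^s$ for every $s\notin\Hcal$ and $\x_i^{s+1}=\sum_j W^s_{ij}(\z_j^s-\eta_s\nabla F_j(\z_j^s,\xi_j^s))$, so $(\x_i^t)_i$ is the decentralized-SGD iterate obtained after the $H$ steps of the round starting from the consensus value $\z^{t-H}$ — i.e.\ the value the $\z_i^t$ would take if the server did not aggregate at time $t$. Hence the consensus error $\sumin\normts{\x_i^t-\ox^t}$ is governed by the very estimate established inside the proof of Lemma \ref{lem:bound_local_to_avg_z}: unroll the mixing (each step contracting the consensus error by $|\widehat{\lambda}_2|$ in expectation, via Assumption \ref{assu:mixing_matrix}) while each SGD step injects at most $\eta_s^2 nG^2$ of error by \eqref{eq:bounded_energy_gradients}; summing the geometric series over the $\le H$ steps of the round yields the factor $\alpha=|\widehat{\lambda}_2|/(1-|\widehat{\lambda}_2|)$, and the hypothesis $\eta_{t-H}\le 2\eta_t$ (the stepsize condition $\eta_s\le2\eta_{s+H}$ at $s=t-H$) lets one replace every $\eta_s^2$ in the round by $4\eta_t^2$. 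Substituting the resulting $\Ebb\sumin\normts{\x_i^t-\ox^t}\le\eta_t^2\,4\alpha HnG^2$ into the display above gives the claimed $\frac1K\eta_t^2\,4\alpha HG^2$.

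The conditional-variance step is routine — it is the standard sampling-with-replacement identity and uses only \eqref{eq:exp_oz_is_ox} and the independence of $\Scal_t$. The main obstacle is the second step: making rigorous that the displacement bound of Lemma \ref{lem:bound_local_to_avg_z} transfers to the \emph{pre-aggregation} iterate $\x_i^t$ at a time $t\in\Hcal$, rather than to some $\z_i^t$ with $t\notin\Hcal$. One must check that the unrolling in that proof already accounts for the final mixing step of the round and that no server-averaging term intervenes at time $t$ itself, so that the same $\alpha H G^2$ bound holds verbatim; once this is in place the two pieces combine immediately.
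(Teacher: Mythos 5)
Your proposal is correct and follows essentially the same route as the paper: the same $1/K$ sampling-variance identity from the i.i.d.\ with-replacement draws and \eqref{eq:exp_oz_is_ox}, followed by bounding $\Ebb\sumin\normts{\x_i^t-\ox^t}$ by rerunning the recursion of Lemma \ref{lem:bound_local_to_avg_z} on the pre-aggregation iterates. The subtlety you flag about the full round of $H$ mixing steps is exactly why the paper states Lemma \ref{lem:bound_local_to_avg_z} with $H$ rather than $H-1$, so the transfer goes through verbatim.
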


Lastly, we have the following lemma from \cite{li2019convergence}.
\begin{lemma}\label{lem:technical_lemma_Li}
    Let a sequence $\Delta^t$ satisfy 
    \begin{equation} \label{eq:technical_lemma_li}
        \Delta^{t+1} \leq (1-\mu\eta_t) \Delta^t + \eta_t^2 B 
    \end{equation}
    with $\mu, B > 0$. Then, for a diminishing stepsize $\eta_t = \frac{2}{\mu(\gamma+t)}$ with $\gamma > 0$, it holds that $\Delta_t \leq \frac{v}{\gamma+t}$, where $v = \max\{\frac{4B}{\mu^2},(\gamma+1)\Delta_1\}$.
\end{lemma}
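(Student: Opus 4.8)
The plan is to prove the bound $\Delta_t \le \frac{v}{\gamma+t}$ by straightforward induction on $t$, since the claim is simply a closed form for the scalar recursion \eqref{eq:technical_lemma_li}. First I would substitute the stepsize to put the recursion in scale-free form: writing $\tau := \gamma + t$, we have $\mu\eta_t = 2/\tau$ and $\eta_t^2 B = \frac{4B}{\mu^2 \tau^2}$, so the recursion reads $\Delta_{t+1} \le \bigl(1 - \tfrac{2}{\tau}\bigr)\Delta_t + \frac{4B}{\mu^2\tau^2}$. At this point the two arguments of $v = \max\{4B/\mu^2,\,(\gamma+1)\Delta_1\}$ are revealed to be exactly the two quantities the induction must dominate — one for the base case and one for the additive term.

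For the base case $t = 1$, the second argument gives $v \ge (\gamma+1)\Delta_1$ immediately, hence $\Delta_1 \le v/(\gamma+1)$. For the inductive step I would assume $\Delta_t \le v/\tau$ and insert it into the rewritten recursion. Provided the contraction factor $1 - 2/\tau$ is nonnegative, monotonicity permits replacing $\Delta_t$ by its upper bound $v/\tau$, which yields $\Delta_{t+1} \le \frac{(\tau-2)v}{\tau^2} + \frac{4B}{\mu^2\tau^2}$. Now the first argument of the max enters: since $v \ge 4B/\mu^2$, the additive term is at most $v/\tau^2$, and combining the two terms gives $\Delta_{t+1} \le \frac{(\tau-1)v}{\tau^2}$.

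The induction then closes with the elementary inequality $\frac{\tau-1}{\tau^2} \le \frac{1}{\tau+1}$, equivalent after cross-multiplication to $\tau^2 - 1 \le \tau^2$ and hence always true; this delivers $\Delta_{t+1} \le \frac{v}{\tau+1} = \frac{v}{\gamma+t+1}$. The one genuine subtlety — the step I would check most carefully — is the sign of the contraction factor, because replacing $\Delta_t$ by $v/\tau$ is only legitimate when $1 - 2/\tau \ge 0$, i.e.\ when $\tau = \gamma + t \ge 2$. This is precisely where the stepsize constant matters: the choice $\eta_t = 2/(\mu(\gamma+t))$ gives contraction rate $2$, so one needs $\gamma$ not too small. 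In the instantiation of Theorem \ref{theo:convergence_non-iid_partial_participation} we have $\gamma = \max\{8L/\mu - 1, H\} \ge 7$ (using $L \ge \mu$), so $\gamma + t \ge 8$ for all $t \ge 1$ and the contraction factor stays strictly positive throughout; I would state this explicitly to license the monotone substitution. (Should $\tau$ dip below $2$ for very small $\gamma$, one instead uses $\Delta_t \ge 0$ to discard the now-nonpositive contraction term, which still suffices once $\tau \ge \tfrac{1+\sqrt{5}}{2}$.)
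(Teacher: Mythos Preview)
Your induction is correct and is exactly the standard argument: the paper does not prove this lemma itself but defers to the proof of Theorem~1 in \cite{li2019convergence}, and that reference carries out precisely the induction you wrote (substitute $\eta_t$, use the two branches of the $\max$ for the base case and the additive term, then close with $(\tau-1)/\tau^2 \le 1/(\tau+1)$). Your remark on the sign of $1-2/\tau$ is a nice extra; in the paper's application $\gamma \ge 8L/\mu - 1 \ge 7$, so the issue never arises.
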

\begin{proof}
    See proof of Theorem 1 in \cite{li2019convergence}.
\end{proof}

This bound establishes the parameter choices that allow the sequence $\Delta^t$ to converge. 
We have now all the tools necessary to prove the main theorem.


\begin{proof}[Proof of Theorem \ref{theo:convergence_non-iid_partial_participation}]
    We start by noting that 
    \begin{align*}
        \Ebb \normts{\oz^{t+1} - \zs} &= \Ebb \normts{\oz^{t+1} - \ox^{t+1} + \ox^{t+1} - \zs} \\
        &= \Ebb \normts{\oz^{t+1} - \ox^{t+1}} + \Ebb \normts{\ox^{t+1} - \zs}
        + 2 \Ebb \Abraces{\oz^{t+1} - \ox^{t+1}, \ox^{t+1} - \zs}.
    \end{align*}

    The last term becomes zero when taking expectation, since $\Ebb_{\Scal_t} \oz^t = \ox^t$ (eq. \eqref{eq:exp_oz_is_ox}). 
    The first term is zero when $t+1 \notin \Hcal$, and for all other iterations we can bound it using Lemma \ref{lem:difference_averages} (and the fact that $\eta_{t+1}^2 < \eta_t^2$). 
    We bound the second term using Lemma \ref{lem:bound_avg_to_optim}. We have then
    \[ \Ebb \normts{\oz^{t+1} - \zs} \leq 
        \frac{1}{K} \eta_t^2 4 \alpha H G^2
        + (1-\mu \eta_t) \Ebb \norm{\oz^t - \zs}_2^2
        + \frac{2}{n} \Ebb \sumin \normts{\z_i^t - \oz^t} 
        + 6L \eta_t^2 \Gamma 
        + \eta_t^2 \frac{\os^2}{n}. \]

    Using Lemma \ref{lem:bound_local_to_avg_z} to bound $\Ebb \sumin \normts{\z_i^t - \oz^t}$ we get    
    \[ \Ebb \normts{\oz^{t+1} - \zs} 
        \leq (1-\mu \eta_t) \Ebb \norm{\oz^t - \zs}_2^2
        + \eta_t^2 
        \Sbraces{\paren{\frac{4}{K} + 8} \alpha H G^2 + 6L\Gamma + \frac{\os^2}{n}}. \]

    This has the form of \eqref{eq:technical_lemma_li} with 
    $\Delta^t = \normts{\oz^t - \zs}$ and 
    $B = \paren{\frac{4}{K} + 8} \alpha H G^2 + 6L\Gamma + \os^2/n$, 
    so applying Lemma \ref{lem:technical_lemma_Li}:
    \begin{equation} \label{eq:final_inequality}
        \Ebb \normts{\oz^t - \zs} \leq \frac{v}{\gamma + t} 
        \leq \frac{1}{\gamma + t} \paren{ \frac{4B}{\mu^2} + (\gamma+1)\Delta_1 }.
    \end{equation}
    
    Note that in order to ensure $\eta_t \leq \frac{1}{4L}$ (Lemma \ref{lem:bound_avg_to_optim}) and $\eta_t \leq 2\eta_{t+H}$ (Lemmas \ref{lem:bound_local_to_avg_z} and \ref{lem:difference_averages}) we need to set $\gamma = \max\{8\frac{L}{\mu}-1, H\}$.
    Finally, using $L$-smoothness and $\nabla f(\z^*) = 0$,
    \[ \Ebb[f(\oz^t)] - f(\zs) \leq \frac{L}{2} \Ebb \normts{\oz^t - \zs}. \]
    
    Using \eqref{eq:final_inequality} in the inequality above gives the result.    
\end{proof}
\section{Numerical Results} \label{sec:simulations}

In this section we compare the performance of FedDec with that of FedAvg \cite{mcmahan2017communication} in a problem with partial device participation and heterogeneous data.

We consider the linear regression problem
\[ F_i(\z) = \frac{1}{M} \normts{X_i \, \z - Y_i}, \qquad i \in [n], \]
with $X_i, Y_i \in \Rbb^{M \times d}, M = 10$, and $d=25$.
For generating the regression data we follow a procedure similar to \cite{scaman2017optimal}: we set 
$[X_i]_j \sim \Ncal(0,0.25^2), \, j \in [d]$ and $Y_i = c_i(v + \cos(v))$, 
where $v = X_i \ones$ and $c_i = 2^i, i \in [n]$ is a factor that makes the data at each node significantly different from all others.

For the inter-agent communication, we generate geographic graphs of $n=20$ nodes by taking $n$ points distributed uniformly at random in a $1 \times 1$ square and joining with a link all pairs of points whose Euclidean distance is smaller than a radius $r$. 
We test our algorithms in two graphs with $r=0.35$ and $0.5$, respectively (see Figure \ref{fig:graphs}).
We run $T=5000$ iterations with $K=2$, $m=1$, and $H=10,100$. 

Figure \ref{fig:convergence} shows the convergence of FedDec and FedAvg for each graph in Fig. \ref{fig:graphs} and the two values of $H$.
The stepsize was set to the value indicated in Theorem \ref{theo:convergence_non-iid_partial_participation}. 
The lines shown are the average of ten independent runs of the algorithms on the same problem instance.

\begin{figure}[t]
	\centering
        \begin{subfigure}{0.35\linewidth}
		\centering \includegraphics[width=0.9\linewidth]{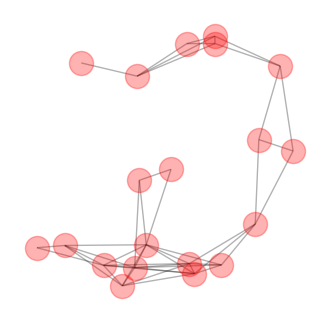}
	\end{subfigure}
  	\begin{subfigure}{0.35\linewidth}
    	\centering \includegraphics[width=0.9\linewidth]{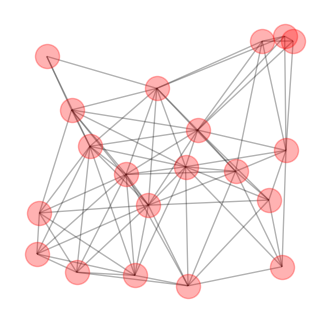}
  	\end{subfigure} 
\caption{Graphs used in the simulations. \textbf{Left:} sparse graph with $r=0.35$. \textbf{Right:} dense graph with $r=0.5$.}
\label{fig:graphs}
\end{figure}

\begin{figure}[t]
	\centering
	\begin{subfigure}{\linewidth}
		\centering \includegraphics[width=0.7\linewidth]{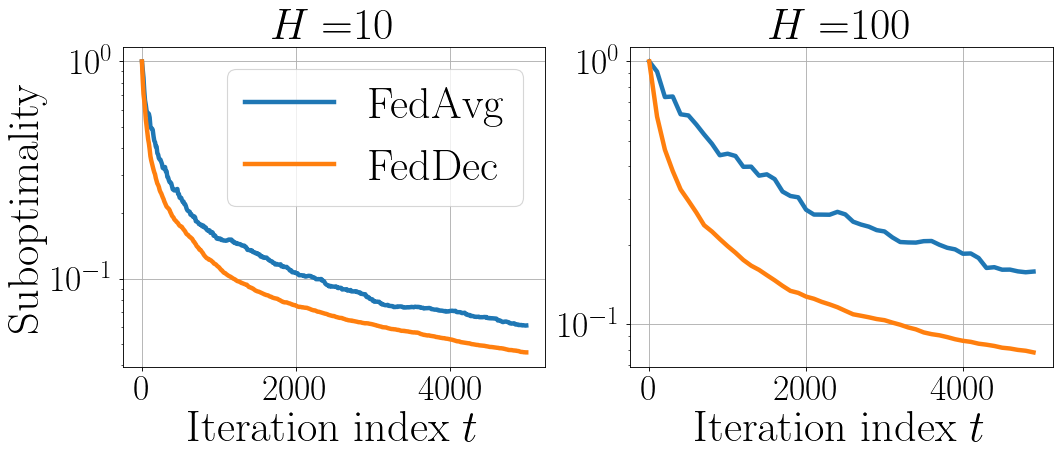}
	\end{subfigure} \\[1ex] 
  	\begin{subfigure}{\linewidth}
    	\centering \includegraphics[width=0.7\linewidth]{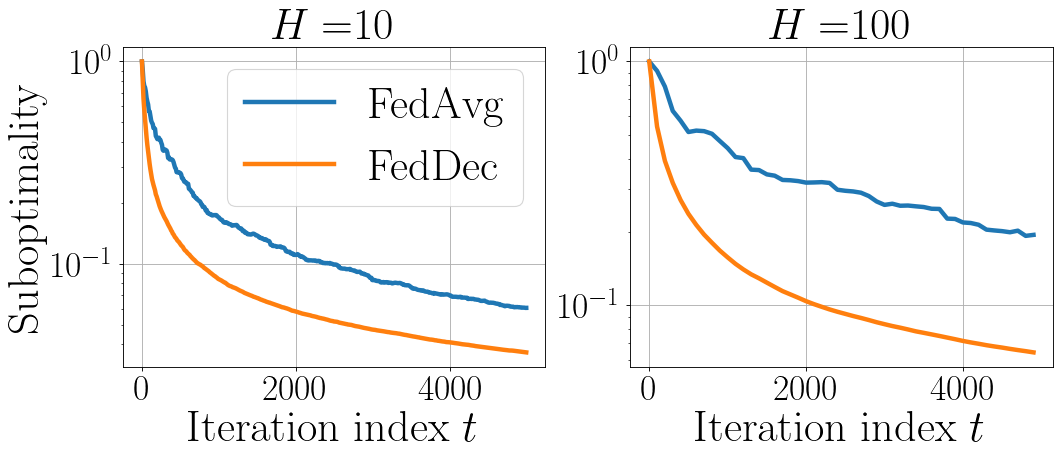}
  	\end{subfigure}
\caption{FedDec versus FedAvg for the sparse graph (top) and the dense graph (bottom). 
When the communication with the server is less frequent (larger $H$), the performance of FedAvg is more degraded than that of FedDec.}
\label{fig:convergence}
\end{figure}

Comparing the  plots in Fig. \ref{fig:convergence} vertically (i.e., comparing the two graphs for the same $H$), 
we confirm that higher connectivity leads to larger gains of FedDec over FedAvg. 
This can be understood intuitively by noticing that a denser graph facilitates a faster spread of information.
Since $|\lambda_2|$ correlates with the graph connectivity (see Table \ref{tab:values_lambda}), in this case it is a good predictor of the convergence speed of FedDec.
However, we note that it has been reported that connectivity, as measured by $|\lambda_2|$, seems to be predictive of the convergence speed of decentralized algorithms (in terms of number of iterations) only when the nodes have sufficiently different data \cite{neglia2020decentralized}, as is the case in our simulations.
When the data is iid among the nodes, the \textit{number of effective neighbors} seems to be a better predictor \cite{vogels2022beyond}.

Comparing the  plots in Fig. \ref{fig:convergence} horizontally (i.e., comparing the $H$ for a given graph), we verify that as $H$ increases the convergence speed of FedAvg decreases more than that of FedDec.
Therefore, FedDec allows for sparser server communication rounds without significantly sacrificing convergence speed, which is in accordance with Theorem \ref{theo:convergence_non-iid_partial_participation}.

One may wonder whether in practice $\alpha$ takes values much smaller than $H$ so that the gains of FedDec can actually be observed.
Assuming a fixed $W$, this question is equivalent to asking what are typical values for $|\lambda_2|^2$ (see Fig. \ref{fig:lambda_dependence}). 
Table \ref{tab:values_lambda} shows this value for many graphs, where $W$ was constructed using the graph's Laplacian \cite{xiao2004fast}.
We computed these values for geographic graphs with different linking radii $r$ and random graphs with different link probabilities $p$, and in both cases, for different number of nodes. 
Geographic graphs are good models for wireless networks \cite{barthelemy2011spatial}, while random graphs have the small-world property of other kinds of networks, such as the Internet \cite{newman2003structure}.
The values of $r$ and $p$ are such that the networks in the same row and column under each graph type have approximately the same number of edges. 
The numbers shown are the average over 10 independent realizations, and the values corresponding to the graphs in Fig. \ref{fig:graphs} are shown in gray. 
We observe that in all cases $|\lambda_2|^2 < 0.9$, which implies $\alpha < 9$. 
Therefore, unless $H$ is particularly small, FedDec is expected to be (potentially, much) faster than FedAvg. 
In particular, for random graphs, which have low network diameter, $|\lambda_2|^2$ decreases more abruptly as connectivity grows.
This further indicates that for well-connected networks, inter-agent communication can make the first term of $B$ in Theorem \ref{theo:convergence_non-iid_partial_participation} become negligible.

\renewcommand{\arraystretch}{1.2}
\begin{table}
\centering
\caption{Values of $|\lambda_2|^2$ for different graphs.}
\label{tab:values_lambda}
\begin{tabular}{|c|c|c|c|}
    \hline
    \multicolumn{4}{|c|}{Geographic graph} \\ \hline
    Connection radius  & $n = 10$   & $n = 20$  & $n = 40$ \\ \hline
    $r = 0.35$     & \textbf{0.78}	& {\cellcolor{gray!25}}\textbf{0.87}      & \textbf{0.83} \\ \hline
    $r = 0.5$      & \textbf{0.7}   & {\cellcolor{gray!25}}\textbf{0.64}     & \textbf{0.56} \\ \hline
    $r = 0.65$	   & \textbf{0.41}	& \textbf{0.33}	 & \textbf{0.34} \\ \hline\hline
    \multicolumn{4}{|c|}{Random graph} \\ \hline
    Link probability & $n = 10$   & $n = 20$  & $n = 40$ \\ \hline
    $p = 0.3$	    & \textbf{0.7}	     & \textbf{0.62}	     & \textbf{0.4} \\ \hline
    $p = 0.5$	    & \textbf{0.42}	     & \textbf{0.29}	     & \textbf{0.17} \\ \hline
    $p = 0.7$	    & \textbf{0.25}	     & \textbf{0.13}	     & \textbf{0.083} \\ \hline
\end{tabular}
\end{table}

\section{Conclusion}

We have presented FedDec, an algorithm that exploits inter-agent communication in FL settings by averaging the agents' parameters with those of their neighbors before each new local SGD update.
We proved that this modification reduces the negative impact of local updates on convergence and that the magnitude of this reduction depends on the spectrum of the graph defining the communication network. 
This further indicates that the effect of local updates and partial device participation can become negligible if the communication network is well-connected. 

This insight suggests that there exists a connectivity threshold where the server does not help convergence anymore. 
Furthermore, we conjecture that for sufficiently dense networks, server communication rounds might even hurt. Future directions include studying this threshold and other trade-offs of peer-to-peer aided FL. 

Overall, exploiting inter-agent communication in FL is a promising way to reduce the frequency of server communication rounds without significantly hurting convergence.

\appendix

\section{Useful Properties}
This section groups a number of facts used in the proofs.

\begin{fact}
    For two vectors $u,v \in \Rbb^d$ and $\delta > 0$ it holds 
    \begin{equation} \label{eq:CS_AM-GM}
        -2\Abraces{\u,\v} \leq \frac{1}{\delta} \normts{\u} + \delta \normts{\v}.
    \end{equation}
    This also holds for matrices $u,v \in \Rbb^{m \times n}$ and the Frobenius norm.
    It can be shown by manipulating the term 
    $\norm{\frac{1}{\delta} \u + \delta \v}_2^2 \geq 0$.
\end{fact}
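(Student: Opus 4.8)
The plan is to obtain the inequality from the non-negativity of a single squared norm, exactly as the statement's hint indicates, which is the standard (weighted) Young inequality for inner products. Since $\delta > 0$, the quantity $\sqrt{\delta}$ is a well-defined positive real, so I would work with the vector $\tfrac{1}{\sqrt{\delta}}\,\u + \sqrt{\delta}\,\v \in \Rbb^d$ (equivalently, apply the hint's expansion with $\delta$ replaced by $\sqrt{\delta}$, which is legitimate precisely because $\delta>0$). Starting from the fact that its squared Euclidean norm is non-negative and expanding by bilinearity of the inner product,
\[ 0 \;\leq\; \normts{\tfrac{1}{\sqrt{\delta}}\,\u + \sqrt{\delta}\,\v} \;=\; \frac{1}{\delta}\,\normts{\u} + 2\Abraces{\u,\v} + \delta\,\normts{\v}, \]
and rearranging the two end terms yields exactly $-2\Abraces{\u,\v} \leq \frac{1}{\delta}\normts{\u} + \delta\,\normts{\v}$, which is the claim. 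The only place the hypothesis $\delta>0$ is used is in making $\sqrt{\delta}$ real and in ensuring the coefficients $1/\delta$ and $\delta$ are positive; the particular choice of scaling ($\sqrt{\delta}$ rather than $\delta$) is what forces the cross term to come out as precisely $2\Abraces{\u,\v}$ and the square terms as $\frac{1}{\delta}\normts{\u}$ and $\delta\normts{\v}$.

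For the matrix version I would rerun the identical computation with the Frobenius inner product $\Abraces{U,V} = \tr(U^{T}V)$ on $\Rbb^{m\times n}$, whose induced norm is the Frobenius norm. The argument uses only that $\Abraces{\cdot,\cdot}$ is bilinear, that $\|W\|_F^2 = \Abraces{W,W}$, and that $\|W\|_F^2 \geq 0$, all of which hold verbatim; no new ingredient is needed.

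There is essentially no obstacle here: the proof is a one-line expansion of a perfect square, and the only subtlety worth flagging is the scaling choice noted above. I would present the displayed computation in two or three lines and move on.
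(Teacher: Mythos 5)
Your proof is correct and follows essentially the same route the paper intends: expanding the non-negative squared norm of a suitably scaled sum and rearranging. You also rightly note that the scaling must be $1/\sqrt{\delta}$ and $\sqrt{\delta}$ (rather than the $1/\delta$ and $\delta$ literally written in the paper's hint) for the cross term and coefficients to come out exactly as stated; this is a minor but genuine correction to the hint, and the extension to the Frobenius inner product is handled properly.
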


\begin{fact} 
    For two matrices $A,B \in \Rbb^{m \times n}$ it holds that 
    \begin{equation} \label{eq:bound_frob_A+B_alphas}
        \norm{A+B}_F^2 \leq (1+\alpha^{-1}) \norm{A}_F^2 + (1+\alpha) \norm{B}_F^2.
    \end{equation}
    This can be shown using \eqref{eq:CS_AM-GM}.
\end{fact}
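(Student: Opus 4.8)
The plan is to peel the claim down to the single two-term inequality \eqref{eq:CS_AM-GM} of the preceding fact. The Frobenius norm is the one induced by the inner product $\Abraces{A,B}_F = \tr(A^{T}B)$, so for any $A,B\in\Rbb^{m\times n}$ one has the exact identity
\[ \norm{A+B}_F^2 = \norm{A}_F^2 + 2\Abraces{A,B}_F + \norm{B}_F^2. \]
Hence it suffices to establish $2\Abraces{A,B}_F \le \alpha^{-1}\norm{A}_F^2 + \alpha\norm{B}_F^2$ for an arbitrary $\alpha>0$: substituting this into the identity and collecting the $\norm{A}_F^2$ and $\norm{B}_F^2$ contributions yields $\norm{A+B}_F^2 \le (1+\alpha^{-1})\norm{A}_F^2 + (1+\alpha)\norm{B}_F^2$.

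For that remaining bound I would invoke \eqref{eq:CS_AM-GM} directly in the Frobenius geometry; the preceding fact already records that \eqref{eq:CS_AM-GM} is valid for matrices and the Frobenius norm, since its square-completion proof uses only bilinearity of the inner product and nonnegativity of the induced norm. Choosing $\delta=\alpha$ and applying \eqref{eq:CS_AM-GM} with $\u\mapsto -A$ and $\v\mapsto B$, and using $\norm{-A}_F=\norm{A}_F$, gives $2\Abraces{A,B}_F = -2\Abraces{-A,B}_F \le \alpha^{-1}\norm{A}_F^2 + \alpha\norm{B}_F^2$, which closes the argument.

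I do not expect a genuine obstacle here: the statement is just Young's inequality for the Frobenius inner product, a one-line consequence of \eqref{eq:CS_AM-GM}. The only two points that need a little care are the sign bookkeeping --- \eqref{eq:CS_AM-GM} is stated for $-2\Abraces{\u,\v}$, whereas expanding $\norm{A+B}_F^2$ produces $+2\Abraces{A,B}_F$, which is handled by flipping the sign of one argument --- and remembering that the symbol $\alpha$ in this fact is a free internal parameter of the inequality, with no relation to the spectral quantity $\alpha=|\widehat{\lambda}_2|/(1-|\widehat{\lambda}_2|)$ of Theorem~\ref{theo:convergence_non-iid_partial_participation}; downstream the parameter will be instantiated (typically taken large, so that $1+\alpha^{-1}$ is close to $1$) to trade the two terms off against a contraction factor coming from the mixing matrices.
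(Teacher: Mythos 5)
Your proof is correct and matches the route the paper intends: expand $\norm{A+B}_F^2$ via the Frobenius inner product and absorb the cross term with \eqref{eq:CS_AM-GM} at $\delta=\alpha$, which is exactly the one-line argument the paper gestures at with ``can be shown using \eqref{eq:CS_AM-GM}.'' One minor aside: the free parameter $\alpha$ is in fact later instantiated to the spectral quantity $|\widehat{\lambda}_2|/(1-|\widehat{\lambda}_2|)$ in the proof of Lemma \ref{lem:bound_local_to_avg_z} (so the coincidence of notation is deliberate), but this has no bearing on the validity of your argument.
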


\begin{fact} 
    For a matrix $A \in \Rbb^{m \times n}$ it can be shown that 
    \begin{equation} \label{eq:bound_frob_A-n11}
        \norm{A\paren{I-\noo}}_F^2 \leq \norm{A}_F^2.
    \end{equation}
\end{fact}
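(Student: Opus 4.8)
The plan is to exploit the fact that $P := \noo = \frac{1}{n}\ones\ones^T$ is an orthogonal projection matrix. First I would check this directly: $P$ is symmetric, and $P^2 = \frac{1}{n^2}\ones(\ones^T\ones)\ones^T = \frac{1}{n}\ones\ones^T = P$ because $\ones^T\ones = n$. Hence $I - P$ is also symmetric and idempotent, i.e. $(I-P)^2 = I-P$.

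Next I would rewrite the Frobenius norm as a trace and simplify using these properties together with the cyclic invariance of the trace:
\[
\norm{A(I-P)}_F^2 = \tr\!\paren{(I-P)^T A^T A (I-P)} = \tr\!\paren{A^T A (I-P)^2} = \tr\!\paren{A^T A (I-P)}.
\]
Expanding the last expression gives $\tr(A^T A) - \tr(A^T A P) = \norm{A}_F^2 - \tr(A^T A P)$. Then I would show the subtracted term is nonnegative: $\tr(A^T A P) = \frac{1}{n}\tr(A^T A \ones\ones^T) = \frac{1}{n}\ones^T A^T A \ones = \frac{1}{n}\normts{A\ones} \ge 0$. Combining, $\norm{A(I-P)}_F^2 = \norm{A}_F^2 - \frac{1}{n}\normts{A\ones} \le \norm{A}_F^2$, which is the claim (and in fact quantifies the gap, which could be useful elsewhere).

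There is no real obstacle here; the only care needed is the bookkeeping with transposes and the cyclic trace identity. As an alternative one-line argument, one can instead invoke submultiplicativity of the Frobenius norm under the spectral norm, $\norm{AM}_F \le \norm{A}_F\,\norm{M}_2$, applied with $M = I - P$: since $I-P$ is an orthogonal projection, all its eigenvalues lie in $\{0,1\}$, so $\norm{I-P}_2 \le 1$ and the bound follows immediately (the $n=1$ case, where $I-P=0$, being trivial).
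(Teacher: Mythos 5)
Your proof is correct and complete. The paper itself offers no argument for this fact (it is simply asserted with ``it can be shown that''), so there is nothing to diverge from; your observation that $I-\noo$ is an orthogonal projection, followed by the trace computation yielding the exact identity $\norm{A(I-\noo)}_F^2 = \norm{A}_F^2 - \frac{1}{n}\normts{A\ones}$, is the standard and cleanest way to establish the claim, and the spectral-norm one-liner is an equally valid alternative.
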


\begin{fact} 
    Let $M \in \Rbb^{n \times n}$ be a symmetric matrix satisfying $\ones^T M = \ones$ and having eigenvalues 
    $1 = \lambda_1 > \abs{\lambda_2} \geq \ldots \geq \abs{\lambda_n}$. 
    Then, for a vector $\x \in \Rbb^n$ with the average of its entries denoted $x_{\avg} = \ones^T \x$, it holds
    \begin{equation} \label{eq:bound_lambda_2}
        (\x - x_{\avg} \ones)^T M (\x - x_{\avg} \ones) 
        \leq |\lambda_2| \, \norm{\x - x_{\avg} \ones}_2^2.
    \end{equation}
    This is a consequence of the spectral theorem and the fact that $(\x - x_{\avg} \ones) \perp \text{span}\{\ones\}$.
\end{fact}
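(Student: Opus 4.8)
The plan is to recognize \eqref{eq:bound_lambda_2} as nothing more than a Rayleigh‑quotient bound for $M$ restricted to the hyperplane orthogonal to $\ones$, which is exactly where the vector $\x - x_{\avg}\ones$ lives. I read $x_{\avg}$ as the arithmetic mean $\tfrac1n\ones^T\x$ and the stated identity as $\ones^T M = \ones^T$; since $M$ is symmetric this gives $M\ones = \ones$, so $\ones$ is an eigenvector of $M$ associated with $\lambda_1 = 1$. Because the spectrum satisfies the strict inequality $1 = \lambda_1 > |\lambda_2|$, no other eigenvalue equals $1$, hence $\lambda_1$ is simple and its eigenspace is precisely $\text{span}\{\ones\}$.

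First I would invoke the spectral theorem: since $M$ is real symmetric it admits an orthonormal eigenbasis $\v_1,\dots,\v_n$ with $M\v_k = \lambda_k\v_k$, and we may fix $\v_1 = \ones/\sqrt n$. Next I would check orthogonality: writing $\y := \x - x_{\avg}\ones$, we have $\ones^T\y = \ones^T\x - x_{\avg}\,n = 0$, so $\y \perp \v_1$ and therefore $\y = \sum_{k=2}^n c_k\v_k$ for suitable coefficients $c_k$, with $\sum_{k=2}^n c_k^2 = \normts{\y}$ by Parseval. Expanding the quadratic form then gives $\y^T M\y = \sum_{k=2}^n \lambda_k c_k^2$, and since $\lambda_k \le |\lambda_k| \le |\lambda_2|$ and $c_k^2\ge 0$ for every $k\ge 2$, this is at most $|\lambda_2|\sum_{k=2}^n c_k^2 = |\lambda_2|\,\normts{\y}$, which is exactly \eqref{eq:bound_lambda_2}.

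I do not expect a genuine obstacle here; the only two points needing a word of care are that the eigenvalues $\lambda_k$ with $k\ge 2$ may be negative, so one must bound through $|\lambda_k|$ rather than use $\lambda_k \le \lambda_2$ directly, and that the clean decomposition of $\y$ relies on $\v_1$ being the \emph{unique} unit top eigenvector, which is precisely what the spectral gap $\lambda_1 > |\lambda_2|$ guarantees. Both are immediate from the spectral theorem together with the hypothesis on the ordering of the $\lambda_i$.
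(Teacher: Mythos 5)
Your proof is correct and is exactly the argument the paper sketches: it justifies the Fact by appealing to the spectral theorem and the orthogonality $(\x - x_{\avg}\ones) \perp \text{span}\{\ones\}$, which you simply flesh out via the eigenbasis expansion and the bound $\lambda_k \le |\lambda_k| \le |\lambda_2|$ for $k \ge 2$. You also correctly repair the two typos in the statement ($\ones^T M = \ones^T$ and $x_{\avg} = \tfrac{1}{n}\ones^T\x$), and your caution about possibly negative eigenvalues is exactly the right point of care.
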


\section{Proofs of Lemmas} \label{app:proofs_of_lemmas}
\begin{proof}[Proof of Lemma \ref{lem:bound_avg_to_optim}]

It holds that 

\begin{align*} 
    \Ebb {\normts{\ox^{t+1} - \zs}}
    &= \Ebb {\norm{\oz^t - \frac{\eta_t}{n} \sumin \nabla F_i(\z_i^t,\xi_i^t) - \zs}_2^2} \\
    &= \Ebb \normts{\oz^t - \zs - \frac{\eta_t}{n} \sumin \nabla F_i(\z_i^t)} 
    + \eta_t^2 \; \Ebb \norm{\frac{1}{n} \sumin \nabla F_i(\z_i^t) - \frac{1}{n} \sumin  \nabla F_i(\z_i^t,\xi_i^{j_t})}_2^2 \\
    & \qquad + 2 \; \Ebb \biggl\langle \oz^t - \zs - \frac{\eta_t}{n} \sumin \nabla F_i(\z_i^t), 
    \frac{\eta_t}{n} \sumin \nabla F_i(\z_i^t) - \frac{\eta_t}{n} \sumin  \nabla F_i(\z_i^t,\xi_i^{j_t}) \biggr\rangle
    \numberthis \label{eq:first_bound_avg_to_opt}
\end{align*}
where in the first equality we used that 
\[ \ox^{t+1} = \frac{1}{n} \sumin \x_i^{t+1} 
    = \frac{1}{n} \sumin \sum_{j=1}^n W_{ij} \x_i^{t+\frac{1}{2}} 
    = \frac{1}{n} \sumin \x_i^{t+\frac{1}{2}} \sum_{j=1}^n W_{ij}
    = \ox^{t+\frac{1}{2}} = \oz^t - \frac{\eta_t}{n} \sumin \nabla F_i(\z_i^t,\xi_i^t) \]
and in the second equality we added and subtracted $\frac{\eta_t}{n} \sumin \nabla F_i(\x_i^t)$ inside the norm. 

Note that the last term in \eqref{eq:first_bound_avg_to_opt} is zero, since 
$\Ebb_{\xi_i^t}[\nabla F_i(\z_i^t,\xi_i^t)] = \nabla F_i(\z_i^t)$, 
and the second term is bounded by $\eta_t^2 \frac{\os^2}{n}$ (eq. \eqref{eq:bound_avg_sigma_2}). 
The first term in \eqref{eq:first_bound_avg_to_opt} can be written as (we will apply the expectation directly to the end result)
\begin{equation} \label{eq:term_1}
    \normts{\oz^t - \zs - \frac{\eta_t}{n} \sumin \nabla F_i(\z_i^t)} =
    \normts{\oz^t - \zs}
    + \eta_t^2 \underbrace{ \normts{\frac{1}{n} \sumin \nabla F_i(\z_i^t)}}_{(A)}
    \underbrace{-2 \eta_t \Abraces{\oz^t - \zs, \frac{1}{n} \sumin \nabla F_i(\z_i^t)}}_{(B)}.
\end{equation}

We can bound term $(A)$ with
\begin{align*}
    \normts{\frac{1}{n} \sumin \nabla F_i(\z_i^t)} 
    &\leq \frac{1}{n} \sumin \normts{\nabla F_i(\z_i^t)} \\
    &= \frac{1}{n} \sumin \normts{\nabla F_i(\z_i^t) - \nabla F_i(\z_i^*)} \\
    &\stackrel{\eqref{eq:smooth_df2_leq_ffs}}{\leq} \frac{1}{n} \sumin 2L (F_i(\z_i^t) - F_i(\z_i^*)). \numberthis  \label{eq:bound_norm_grad_Fi}
\end{align*}

On the other hand, term $(B)$ can be bounded as 
\begin{align*}
    -2 \eta_t \Abraces{\oz^t - \zs, \frac{1}{n} \sumin \nabla F_i(\z_i^t)}
    &= \frac{-2 \eta_t}{n} \sumin \Abraces{\oz^t - \z_i^t + \z_i^t - \zs, \nabla F_i(\z_i^t)} \\
    &= \frac{-2 \eta_t}{n} \sumin \Abraces{\oz^t - \z_i^t, \nabla F_i(\z_i^t)}
    - \frac{2 \eta_t}{n} \sumin \Abraces{\z_i^t - \zs, \nabla F_i(\z_i^t)} \\
    &\stackrel{\eqref{eq:CS_AM-GM},\eqref{eq:strong_conv}}{\leq} 
    \frac{\eta_t}{n} \sumin \paren{\frac{1}{\eta_t} \norm{\oz^t - \z_i^t}_2^2 + \eta_t \norm{\nabla F_i(\z_i^t)}_2^2} \\
    &\qquad - \frac{2\eta_t}{n} \sumin \paren{(F_i(\z_i^t)-F_i(\zs)) + \frac{\mu}{2} \norm{\z_i^t - \zs}_2^2}.
\end{align*}
where we applied \eqref{eq:CS_AM-GM} with the choice $\delta = \eta_t$.

Replacing the bounds on $(A)$ and $(B)$ in \eqref{eq:first_bound_avg_to_opt} gives
\begingroup
\addtolength{\jot}{0.7em}
\begin{align*}
    \norm{\oz^t - \zs - \frac{\eta_t}{n} \sumin \nabla F_i(\z_i^t)}_2^2 
    &\leq \normts{\oz^t - \zs} 
    +  \frac{\eta_t^2}{n} 2L \sumin (F_i(\z_i^t) - F_i(\z_i^*)) + \frac{\eta_t}{n} \sumin \paren{\frac{1}{\eta_t} \norm{\oz^t - \z_i^t}_2^2 + \eta_t \norm{\nabla F_i(\z_i^t)}_2^2} \\
    &\qquad - \frac{2\eta_t}{n} \sumin \paren{(F_i(\z_i^t)-F_i(\zs)) + \frac{\mu}{2} \norm{\z_i^t - \zs}_2^2} \\
    &\stackrel{\eqref{eq:bound_norm_grad_Fi}}{\leq}
    (1-\mu \eta_t) \norm{\oz^t - \zs}_2^2 
    + \frac{1}{n} \sumin \norm{\oz^t - \z_i^t}_2^2 \\
    &\qquad + \underbrace{ \eta_t^2 \frac{4L}{n} \sumin (F_i(\z_i^t) - F_i(\z_i^*)) 
    - \frac{2\eta_t}{n} \sumin (F_i(\z_i^t)-F_i(\zs)) }_{(C)}
\end{align*}
\endgroup

where we used 
\[ -\frac{\eta_t \mu}{n} \sumin \norm{\z_i^t - \zs}_2^2 \leq
- \eta_t \mu \norm{\frac{1}{n} \sumin (\z_i^t - \zs)}_2^2 \\
= - \eta_t \mu \norm{\oz^t - \zs}_2^2. \]

Term $(C)$ was bounded in the proof of Lemma 1 of \cite{li2019convergence}, where for $\eta_t \leq \frac{1}{4L}$ and $\Gamma = \frac{1}{n} \sumin (F_i(\zs) - F_i(\z_i^*))$ they obtained
\[ (C) \leq \frac{1}{n} \sumin \normts{\z_i^t - \oz^t} + 6L \eta_t^2 \Gamma. \]

Replacing these bounds in \eqref{eq:first_bound_avg_to_opt} gives the lemma.
\end{proof}


\begin{proof}[Proof of Lemma \ref{lem:bound_local_to_avg_z}]

We define 
\[ Z^t = \Sbraces{\z_1^t \cdots \z_n^t} \in \Rbb^{d \times n} \]
\[ \overline{Z}^t = Z^t \noo = \Sbraces{\oz^t \cdots \oz^t } \in \Rbb^{d \times n} \]
\[ \partial F(Z^t,\xi^t) = \Sbraces{\nabla F_1(\z_1^t,\xi_1^t) \cdots \nabla F_n(\z_n^t,\xi_n^t)} \in \Rbb^{d \times n} \]
and $X^t, \overline{X}^t$ analogously. Note that with these definitions, 
$\sumin \norm{\z_i^t - \oz^t}_2^2 = \norm{Z^t - \overline{Z}^t}_F^2$.

We denote $t_b \in \Hcal$ the last time the central server broadcasted the sample average to all nodes so that $\z_i^{t_b} = \z^{t_b} \; \forall i \in [n]$, and define $h := t-t_b \leq (H-1)$.
Note that if $t = t_b$ then $\sumin \norm{\z_i^t - \oz}_2^2 = 0$. Therefore, below we assume $h \geq 1$. 
We have that 
 \begin{align*}
    \Ebb \norm{Z^t - \overline{Z}^t}_F^2 
    \stackrel{t \notin \Hcal}{=} 
        &\Ebb \norm{X^t - \overline{X}^t}_F^2
    = \Ebb \Big\|
        \underbrace{(X^{t-\frac{1}{2}} - \overline{X}^{t-\frac{1}{2}})}_{Y} 
        W^{t-1} \Big\|_F^2 \\
    &= \Ebb \sum_{i=1}^d \norm{ Y_{[i,:]} W^{t-1} }_2^2 \\
    &= \Ebb \sum_{i=1}^d Y_{[i,:]} \Ebb_W \Sbraces{W W^T} [Y_{[i,:]}]^T \\
    &\stackrel{\eqref{eq:bound_lambda_2}}{\leq} \Ebb \sum_{i=1}^d Y_{[i,:]} 
    \underbrace{ \abs{ \lambda_2 \paren{ \Ebb_W \Sbraces{W W^T} } } }_{|\widehat{\lambda}_2|} \, [Y_{[i,:]}]^T \\
    &= |\widehat{\lambda}_2| \; \Ebb \norm{X^{t-\frac{1}{2}} - \overline{X}^{t-\frac{1}{2}}}_F^2 \\
    &= |\widehat{\lambda}_2| \; \Ebb \Big\| Z^{t-1} - \overline{Z}^{t-1}
     - \eta_{t-1} \partial F(Z^{t-1},\xi^{t-1}) (I - \noo) \Big\|_F^2 \\
    &\stackrel{\eqref{eq:bound_frob_A+B_alphas}}{\leq}
    |\widehat{\lambda}_2| \paren{1+\frac{1}{\alpha}} \Ebb \norm{Z^{t-1} - \overline{Z}^{t-1}}_F^2 
    + |\widehat{\lambda}_2| (1+\alpha) \eta_{t-1}^2 \Ebb \norm{\partial F(Z^{t-1},\xi^{t-1}) (I - \noo) }_F^2 \\
    &\stackrel{\eqref{eq:bound_frob_A-n11}}{\leq}
    |\widehat{\lambda}_2| \paren{1+\frac{1}{\alpha}} \Ebb \norm{Z^{t-1} - \overline{Z}^{t-1}}_F^2
    + |\widehat{\lambda}_2| \ (1+\alpha) \eta_{t-1}^2 \Ebb \norm{\partial F(Z^{t-1},\xi^{t-1})}_F^2 \\
    &=
    |\widehat{\lambda}_2| \paren{1+\frac{1}{\alpha}} \Ebb \norm{Z^{t-1} - \overline{Z}^{t-1}}_F^2 {+} \ |\widehat{\lambda}_2|(1+\alpha) \eta_{t-1}^2 n G^2,
\end{align*}       
where in the second line we used that $\ones^T W^t = \ones^T$, and in the fourth line, that the matrices $W^t$ are identically distributed independently of the time $t$.
We also used $Y_{[i,:]}$ to indicate the $i$-th row of matrix $Y$.

We can now apply the inequality recursively to get 
\begin{align*}
    \Ebb \norm{Z^t - \overline{Z}^t}_F^2 
    &\leq \paren{1+\frac{1}{\alpha}} |\widehat{\lambda}_2| \; \Ebb \norm{Z^{t-1} - \overline{Z}^{t-1}}_F^2 
    + (1+\alpha) |\widehat{\lambda}_2| \; \eta_{t-1}^2 n G^2 \\ 
    & \leq \paren{1+\frac{1}{\alpha}} |\widehat{\lambda}_2| \; 
        \Bigg[ \paren{1+\frac{1}{\alpha}} |\widehat{\lambda}_2| \; 
        \Ebb \norm{Z^{t-2} - \overline{Z}^{t-2}}_F^2
    + (1+\alpha) |\widehat{\lambda}_2| \; \eta_{t-2}^2 n G^2 \Bigg] 
        + (1+\alpha) |\widehat{\lambda}_2| \; \eta_{t-1}^2 n G^2  \\
    &= \Sbraces{\paren{1+\frac{1}{\alpha}} |\widehat{\lambda}_2| \;}^2 
        \Ebb \norm{Z^{t-2} - \overline{Z}^{t-2}}_F^2 
    + (1+\alpha) |\widehat{\lambda}_2| \; n G^2 \sum_{i=1}^2 \Sbraces{\paren{1+\frac{1}{\alpha}} |\widehat{\lambda}_2| \;}^{i-1} \eta_{t-i}^2 \\
    &\leq \Sbraces{\paren{1+\frac{1}{\alpha}} |\widehat{\lambda}_2| \;}^h \Ebb \norm{Z^{t_b} - \overline{Z}^{t_b}}_F^2 
     + (1+\alpha) |\widehat{\lambda}_2| \; n G^2 \sum_{i=1}^h \Sbraces{\paren{1+\frac{1}{\alpha}} |\widehat{\lambda}_2| \;}^{i-1} \eta_{t-i}^2.
\end{align*}       

We note that the first term is zero, since at broadcasting time $Z^{t_b} = \overline{Z}^{t_b}$.
We now set $\alpha = \frac{|\widehat{\lambda}_2|}{1 - |\widehat{\lambda}_2|}$ so that the expression between square brackets takes value 1. Therefore, 
\[ \Ebb \norm{Z^t - \overline{Z}^t}_F^2 
\leq \alpha n G^2 H \eta_{t_b}^2 
\leq \alpha n G^2 H 4 \eta_t^2 \]
where we have used that for the choice of $\alpha$ given above it holds $(1+\alpha) |\widehat{\lambda}_2| \; = \alpha$, and that the stepsizes $\eta_t$ are monotonically decreasing and satisfy $\eta_t \leq 2 \eta_{t+H}$. 
\end{proof}

Note that in the result above we could have used $(H-1)$ instead of $H$. However, since  this bound is used again in the proof of Lemma \ref{lem:difference_averages} for $h=H$, we loosen it slightly here to be able to apply it directly in the next proof.


\begin{proof}[Proof of Lemma \ref{lem:difference_averages}]

For $t \in \Hcal$ we have that 
\[ \Ebb \normts{\oz^t - \ox^t} = \Ebb \normts{\frac{1}{K} \sum_{\ell=1}^K \x_{j_\ell}^t - \ox^t}  \\
    = \frac{1}{K^2} \sum_{\ell=1}^K \Ebb \Sbraces{ \EbbSt \normts{  \x_{j_\ell}^t - \ox^t} }
    = \frac{1}{Kn} \sumin \Ebb \normts{  \x_i^t - \ox^t} \]

where we used that for independent $X_i$, $\text{Var}(\sum_i X_i) = \sum_i \text{Var}(X_i)$, 
and that for a random variable $X \in \Omega$ with a discrete probability density function $X \sim f_X$ and a function $g: \Omega \ram \Rbb$ it holds $\Ebb[g(X)] = \sum_x g(x) f_{X}(x)$.

Since $\sumin \normts{ \x_i^t - \ox^t} = \big\|X^t - \overline{X}^t \big\|_F^2$ we can repeat the procedure done in Lemma \ref{lem:bound_local_to_avg_z} to bound this term.
\end{proof}

\bibliographystyle{ieeetr}
\bibliography{bibliography}

\end{document}